\documentclass{article}


\usepackage[final]{neurips_wrl2022}




\usepackage{amsfonts}
\usepackage{amsmath}
\usepackage{amssymb}
\usepackage{amsthm}
\usepackage{bm}
\usepackage{caption}
\usepackage{graphicx}
\usepackage{hyperref}
\usepackage[bb=dsserif]{mathalpha}
\usepackage[capitalize, noabbrev]{cleveref}
\usepackage{xcolor}
\usepackage[utf8]{inputenc} 
\usepackage[T1]{fontenc}    
\usepackage{url}            
\usepackage{booktabs}       
\usepackage{nicefrac}       
\usepackage{microtype}      

\newtheorem{theorem}{Theorem}

\DeclareMathOperator{\vect}{vec}
\DeclareMathOperator{\relu}{ReLU}

\newcommand{\matr}[1]{\bm{#1}}
\newcommand{\onehat}{\hat{\mathbb{1}}_{N_s,N_A}^\intercal}

\graphicspath{{images/}}

\title{Imitating careful experts to avoid catastrophic events}

\author{%
  Jack R. P.~Hanslope \\
  Department of Computer Science \\
  University of Bristol \\
  Bristol, UK \\
  \texttt{jack.hanslope@bristol.ac.uk} \\
  \And
  Laurence Aitchison \\
  Department of Computer Science \\
  University of Bristol \\
  Bristol, UK \\
  \texttt{laurence.aitchison@bristol.ac.uk} \\
}

\begin{document}

\maketitle

\begin{abstract}
RL is increasingly being used to control robotic systems that interact closely with humans.
This interaction raises the problem of safe RL: how to ensure that a RL-controlled robotic system never, for instance, injures a human.
This problem is especially challenging in rich, realistic settings where it is not even possible to clearly write down a reward function which incorporates these outcomes.
In these circumstances, perhaps the only viable approach is based on IRL, which infers rewards from human demonstrations.
However, IRL is massively underdetermined as many different rewards can lead to the same optimal policies; we show that this makes it difficult to distinguish catastrophic outcomes (such as injuring a human) from merely undesirable outcomes.
Our key insight is that humans do display different behaviour when catastrophic outcomes are possible: they become much more careful.
We incorporate carefulness signals into IRL, and find that they do indeed allow IRL to disambiguate undesirable from catastrophic outcomes, which is critical to ensuring safety in future real-world human-robot interactions.
\end{abstract}

\section{Introduction}
Industry is increasingly experimenting with removing traditional safety barriers between robots and humans in favour of close collaboration \citep{galin2020cobots,moreno2020autonomous,simoes2019drivers,michalos2022human,hjorth2022human}.
However, such close collaboration raises serious risks, as industrial robots can and sometimes do seriously injure humans \citep{henley2017finger}.
This raises a question: can we design RL driven robotic control systems that never take catastrophically bad actions such as injuring a human?
This is particularly challenging in a rich environment where it is difficult to rigidly define or simulate all catastrophic outcomes.
In such circumstances, perhaps the only viable approach is some form of imitation learning (a generic term for learning by observing an expert)\citep{Ho_Ermon_2016,Paine_Gomez_etal_2018,Peng_Abbeel_Levine_vandePanne_2018,Peng_Kanazawa_Toyer_Abbeel_Levine_2022}. 
Specifically, we consider inverse reinforcement learning (IRL), in which we infer the reward function driving expert behaviour \citep{Russell_1998, ng2000algorithms, ziebart2008maximum,Finn_Levine_Abbeel_2016, Peng_Kanazawa_Toyer_Abbeel_Levine_2022}.
%


%

However, IRL has one fundamental problem: that many reward functions are usually compatible with the observed behaviour \citep{ng2000algorithms, ramachandran2007bayesian}.
Often, it will be impossible to distinguish between catastrophic events and events that are merely undesirable. 

In contrast, humans and other animals are able to learn to avoid catastrophic events with a very small number of samples (often only one) and observing only e.g.\ a parents or teacher's response to a \textit{potential} catastrophic event, without observing the catastrophic event itself.
For instance, an infant animal will learn to be afraid of a predator if they merely observe their parent's fearful responses to that predator (they do not need to observe another animal actually being predated) \citep{mineka1993mechanisms,askew2008vicarious,dunne2013vicarious,reynolds2018reductions,marin2020vicarious}. 
Alternatively, human operators of robots will be more careful when there is a risk of catastrophic outcomes: for instance, they might slow down when their robot is operating in the vicinity of a human.
We should be able to use these carefulness signals to establish when the human operator believes there is a risk of catastrophic outcomes.

We design an IRL framework incorporating carefulness signals and find that it can distinguish between catastrophic and undesirable outcomes which are in practice indistinguishable without considering carefulness.
We begin with a toy experiment in a gridworld.
The gridworld has a cliff along one side and falling off the cliff represents the catastrophic event.
We initially run simulations in this gridworld with computer experts that know the optimal policy to be followed.
We then run the same simulations again with a human player with a good understanding of the environment and optimal policy.
We see that, in both scenarios, our framework including carefulness enables a better understanding of the severity of falling off of the cliff.

\section{Related Work}
There is a considerable body of work in safe RL \citep{alshiekh2018safe, saunders2017trial,NIPS2017_766ebcd5,pmlr-v70-achiam17a,garcia2012safe}, safe imitation learning \citep{menda2017dropoutdagger, menda2019ensembledagger} and safe IRL \citep{pmlr-v87-brown18a, Brown_Niekum_2018}.
However, none of this work uses the key insight that human carefulness might be used to distinguish between catastrophic and merely undesirable actions.

\section{Methods}

\textbf{Gridworld Environment.}
We use a four by six gridworld \citep{sutton2018reinforcement} environment with a cliff along one edge and a goal-state at the end of the cliff.
Entering any of the cliff states represents a catastrophic event and results in the termination of the episode and a very large negative reward being received.
Reaching the goal-state also terminates the episode and gives a medium positive reward.
The remaining states have a reward of 0.
There is a small cost for each action.
The layout of the environment is shown in \cref{fig:grid_world_layout}.

In the simplest setting of the environment, there are four actions, corresponding to the four directions the agent can move in.
Movement in the environment is stochastic, meaning that there is a chance that the agent moves in a random direction rather than the direction they have chosen.
The cost for all actions is the same.

In more complex versions of the environment, there are more actions.
The actions are tuples of a direction and an amount of carefulness.
Choosing a higher carefulness results in the agent being more likely to follow the specified action, but results in a higher movement cost.
The movement cost is linear in the level of carefulness and the probability of following the specified action is $1-2^{-c}$, where $c$ is the level of carefulness.

\textbf{Simple IRL agent.}
\label{section:method:simple_irl_agent}
Given a policy $\pi$ (or rollouts from that policy), our goal is to find a reward, $R$, under which $\pi$ is optimal.
We use constrained optimization.
In particular, the optimality of $\pi$ imposes constraints on $Q$, and because we can write $Q$ in terms of $R$, we therefore get constraints on $R$.
We give the specific form of these constraints in the Appendix (extensions of those in \citet{ng2000algorithms}) and solve for $R$ using linear programming \citep{cvxopt}.
%
%
%
However, remember that one of the key issues with IRL is distinguishing between these valid reward functions.
Following our extension of \citet{ng2000algorithms}, we maximise
\begin{equation}
  \label{equ:old_objective_function}
  \sum_{s \in S} \left\{ Q^\pi(s, \pi(s)) - \max_{a \in A \backslash \{\pi(s)\}} Q^\pi(s, a) \right\}- \lambda \vert\vert \matr{R} \vert\vert_1
\end{equation}
This encourages reward functions that make the action chosen by the optimal policy better than the next best action by as large a margin as possible.
The second term ($\lambda || \matr{R} ||_1$) encourages the reward function to be sparse (have many zeros).
\cref{thm:r_valid_equiv} in the Appendix describes how to write this objective purely in terms of $\matr{R}$.

\textbf{Expressing the reward in terms of the state reward.}
In our environments,  the reward can be expressed as the sum of an action-reward and a state-reward,
\begin{equation}
\label{equ:r_state_plus_action}
R(s, a) = R_A(a) + R_S(s)
\end{equation}
This form for the reward greatly helps in solving the IRL problem, as we only need to find $|S| + |A|$ unknown rewards; without this decomposition we would need to find $|S||A|$ unknown rewards.


\textbf{Game and Loss IRL agent.}
\label{section:method:game_loss_based_irl_agent}
To test the IRL agent on human data, we developed an interactive, human-playable version of the gridworld.
The player is able to use the arrow keys to move around the gridworld and must try to reach the goal state without falling off the cliff.
As before, the environment is stochastic: there is a chance that the avatar will not move in the direction the player specifies.
The player is able to counteract this by holding down the movement key for longer.
As they do this, the bar at the side representing carefulness will fill up and the arrow showing the direction of travel will also grow.
The player starts the episode with a score of 200 and being more careful is more costly (the costs are shown next to the side bar).
The layout of the game is shown in \cref{fig:game}

\begin{figure}
\centering
\begin{minipage}[t]{0.45\textwidth}
  \centering
  \includegraphics[height=1.5in]{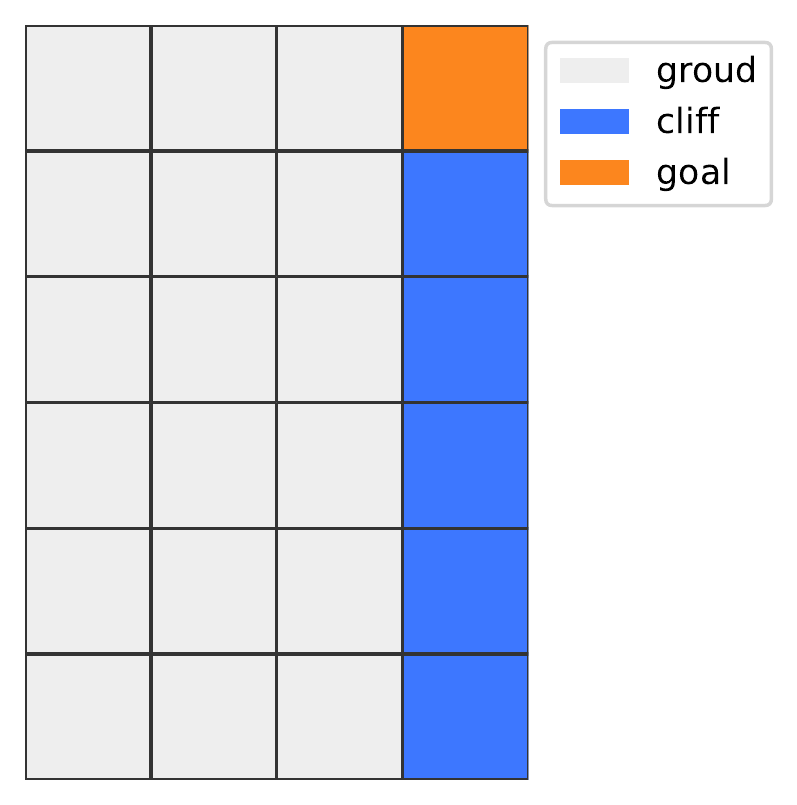}
  \captionof{figure}{Layout of the gridworld}
  \label{fig:grid_world_layout}
\end{minipage}%
\hfill
\begin{minipage}[t]{0.45\textwidth}
  \centering
  \includegraphics[height=1.5in]{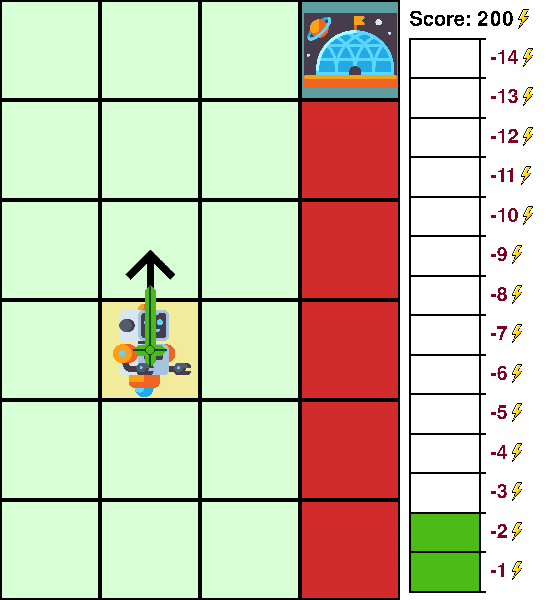}
  \captionof{figure}{Layout of the game}
  \label{fig:game}
\end{minipage}
\end{figure}

Unsurprisingly, when a human plays the game, they do not not follow the exact optimal policy. 
This is problematic as the simple IRL agent above assumes that the optimal policy is known and deterministic.
To resolve this issue, we developed another method which we call the loss IRL agent,
\begin{align}
\mathcal{L}(R_S) = \sum_{s \in S}\exp\left( \relu \left( \sum_{a \in A} \pi(s, a) \max_{a' \in A \backslash \{a\}}Q(s, a') - Q(s, a)\right) \right)
\end{align}
We will minimise the loss, $\mathcal{L}(R_S)$.
We would expect that $Q$ would be highest for the action most preferred by the policy.
When the most preferred action for a given state results in the largest $Q$ value, the term inside the $\relu$ will be 0, otherwise, it will be positive.
%

\section{Experimental results}

\textbf{MaxEnt baselines} As a baseline, we use a MaxEnt method without carefulness from \citet{pmlr-v139-kim21c}.
We choose this method because it is guaranteed to recover the reward in the limit of infinite data, even without carefulness.
The method has deterministic transitions with policy
\begin{align}
\pi(a | s) &= \frac{1}{Z(s)} e^{\beta Q(s, a)}
\end{align}
where $\beta $ is a hyperparameter and $Z(s)$ ensures that the distribution over actions normalizes to $1$.
\citet{pmlr-v139-kim21c} guarantees the existence of a unique optimal reward function for this setting.
We generated 10000 rollouts from an agent in this environment and recover a reward function which is shown in \cref{fig:learned_reward_computer_rollouts_benchmark}.
Thus, while this reward function accurately captures the existence of both the cliff and the goal state, it massively underestimates the severity of the penalty for falling off the cliff, and this likely indicates the need for far more samples to accurately estimate the rewards.

\textbf{Simple IRL agent.}
This is for the agent described in \cref{section:method:simple_irl_agent}.
We solve the MDP using value-iteration to get an optimal policy for the setting with 14 levels of carefulness.
The policy is shown in \cref{fig:optimal_policy}.
As we would expect, the optimal actions when closer to the cliff are to be more careful and often to move away from the cliff.
\begin{figure}
\centering
\begin{minipage}[t]{0.3\textwidth}
  \centering
  \includegraphics[width=0.9\linewidth]{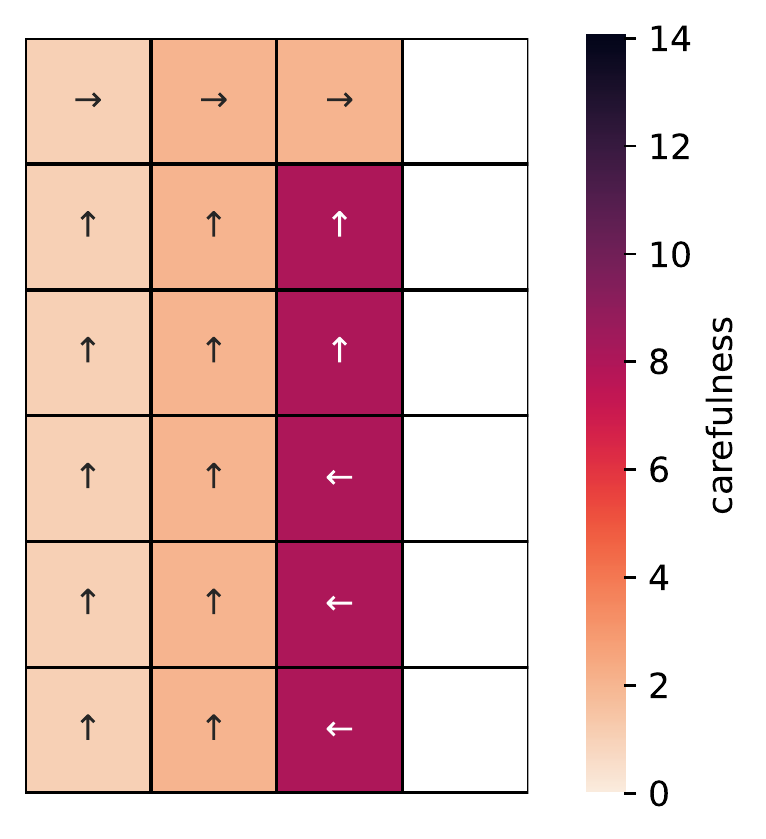}
  \captionof{figure}{Optimal policy for the setting with 14 levels of carefulness}
  \label{fig:optimal_policy}
\end{minipage}%
\hfill
\begin{minipage}[t]{0.3\textwidth}
  \centering
  \includegraphics[width=0.9\linewidth]{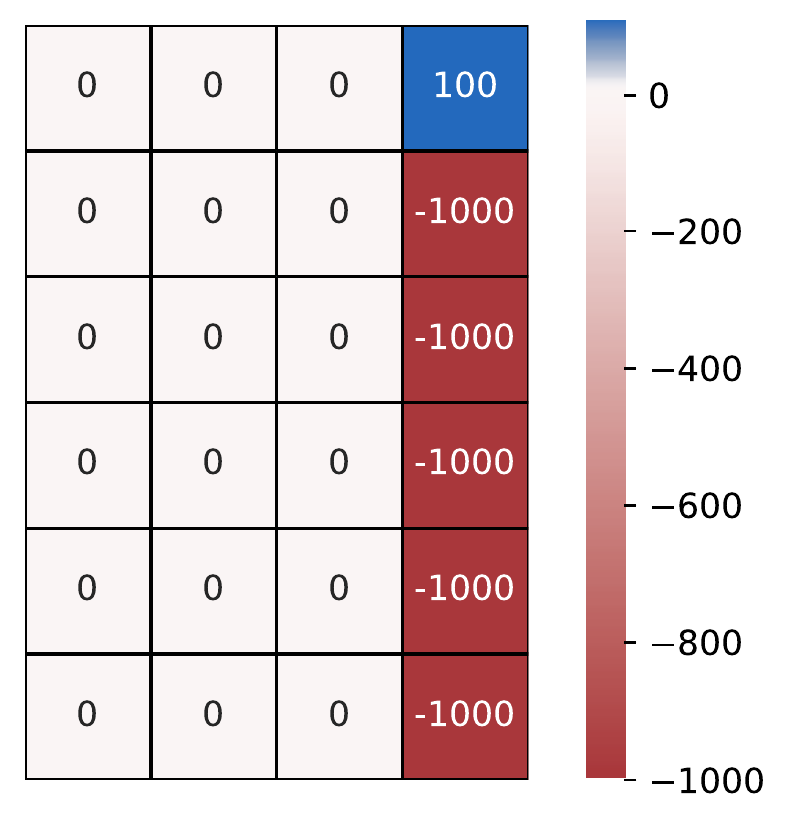}
  \captionof{figure}{True reward}
  \label{fig:true_reward}
\end{minipage}
\hfill
\begin{minipage}[t]{0.3\textwidth}
  \centering
  \includegraphics[width=0.9\linewidth]{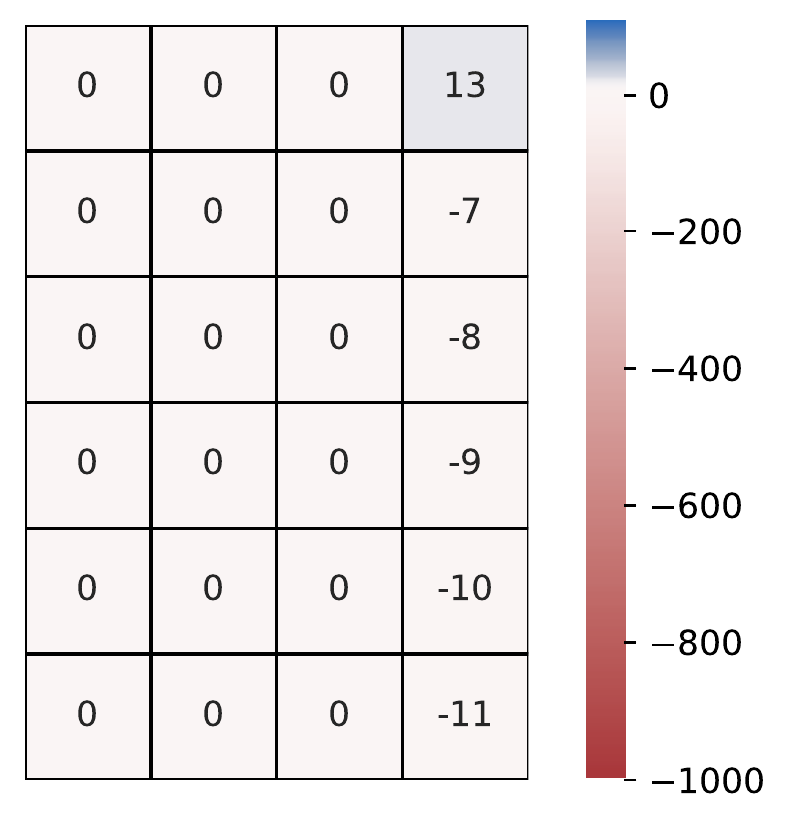}
  \captionof{figure}{Learned reward using computer generated rollouts in the benchmark setting}
  \label{fig:learned_reward_computer_rollouts_benchmark}
\end{minipage}
\end{figure}
We generated 100 rollouts with the agent starting in a random ``ground'' state (see \cref{fig:grid_world_layout}) and then following the optimal policy shown in \cref{fig:optimal_policy}.
We then use these rollouts to train the simple IRL agent using a maximum reward of 1000 and we used $\lambda=0$.
By leveraging carefulness, we are able to learn the severity of the reward function; see \cref{fig:learned_reward_computer_rollouts}.

\begin{figure}
\centering
\begin{minipage}[t]{0.3\textwidth}
  \centering
  \includegraphics[width=0.9\linewidth]{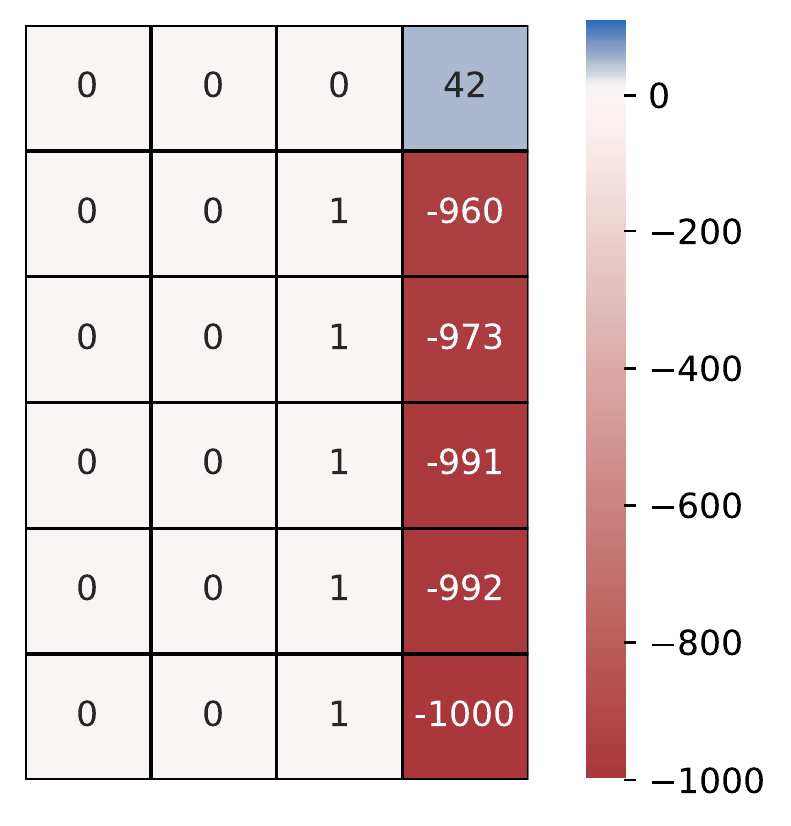}
  \captionof{figure}{Learned reward using computer generated rollouts}
  \label{fig:learned_reward_computer_rollouts}
\end{minipage}%
\hfill
\begin{minipage}[t]{0.3\textwidth}
  \centering
  \includegraphics[width=0.9\linewidth]{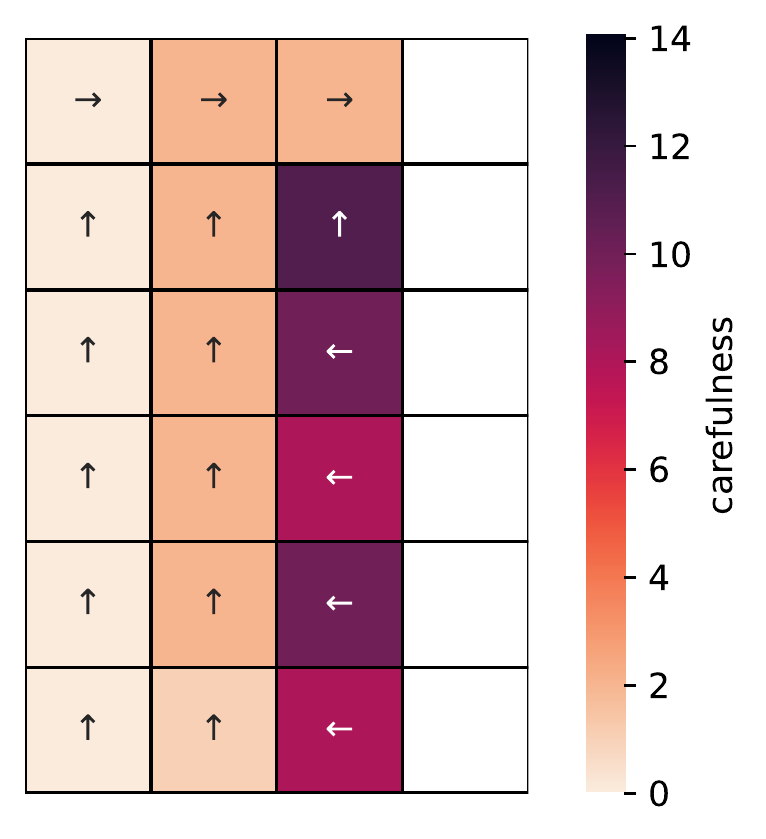}
  \caption{Policy followed by the human}
  \label{fig:human_policy}
\end{minipage}%
\hfill
\begin{minipage}[t]{0.3\textwidth}
  \centering
  \includegraphics[width=0.9\linewidth]{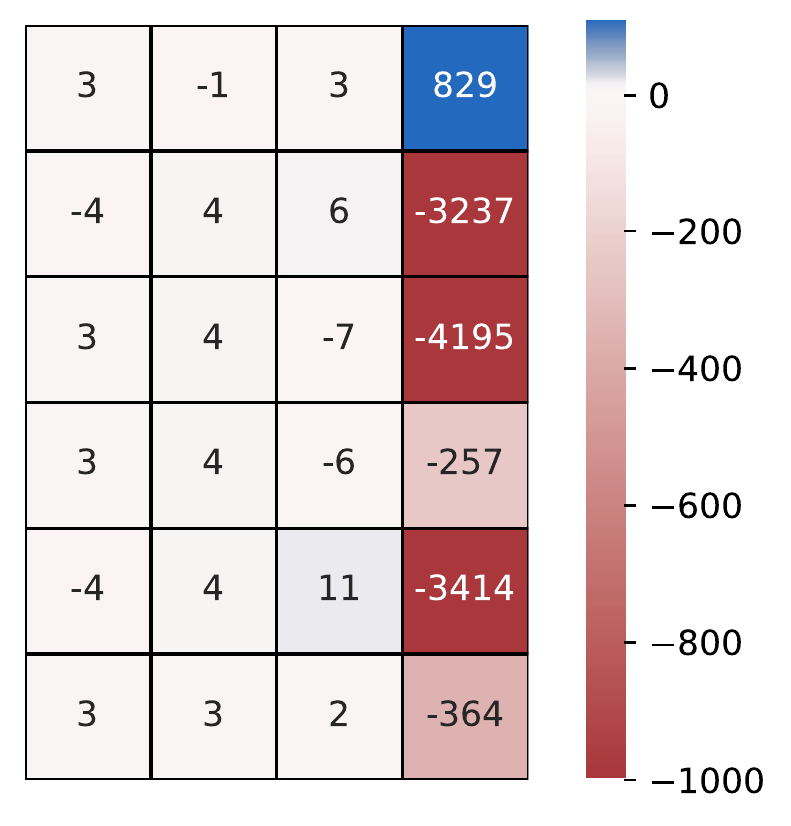}
  \caption{Learned reward from human rollouts}
  \label{fig:human_reward}
\end{minipage}
\end{figure}

\textbf{Loss IRL agent.}
An expert human (one of the authors) played the game for 10 rollouts (vs 10,000 optimal rollouts for the baseline).
Note that some states will have been visited on multiple occasions and that different actions may have been selected on each visitation and so the policy is not deterministic.
The most commonly chosen action for each state is shown in \cref{fig:human_policy}.
An agent as described in \cref{section:method:game_loss_based_irl_agent} was trained on the rollouts and the reward obtained is shown in \cref{fig:human_reward}; compare to the true reward in \cref{fig:true_reward} and the benchmark reward in \cref{fig:learned_reward_computer_rollouts_benchmark}.
While there is considerable variability in the inferred rewards, likely due to only having 10 rollouts, we have achieved our aim, which was to capture the severity of the penalty for falling off the cliff.


\section{Conclusion and future work}
We have shown that IRL can distinguish between catastrophic and merely undesirable rewards if it takes account of human carefulness.
We validated this idea on a simple gridworld task where the human explicitly inputs a degree of carefulness.
In future work, it will be important to extend these ideas to more complex settings (such as robotic simulators) and to understand how to infer carefulness from actual human actions (e.g.\ by using the speed of actions as a proxy).


\bibliographystyle{iclr2023_conference}
\bibliography{references.bib}

\appendix 

\section{Background}
A (finite) Markov decision process (MDP) is defined as the tuple $(S, A, T, \gamma, R)$ where
\begin{itemize}
  \item $S$ is a finite set of $N_S$ states
  \item $A$ is a finite set of $N_A$ actions
  \item $T(s, a, s')$ is the probability of transitioning from state $s$ to state $s'$ when undertaking action $a$
  \item $\gamma \in [0, 1)$ is the discount factor
  \item $R(s, a)$ is the reward received from transitioning out of state $s$ via action $a$
\end{itemize}

A policy is defined as a function $\pi : S \times A \rightarrow [0, 1]$.
A value function for a given policy is a map $V^\pi : S \rightarrow \mathbb{R}$ and represents the expected discounted reward when following policy $\pi$ from a given state.
It is recursively given by
\begin{equation}
V^\pi(s) = \sum_{a \in A} \pi(s, a) \left( R(s, a) + \gamma \sum_{s'\in S} T(s, a, s')V^\pi(s')\right)
\end{equation}
A related quantity is the action-value function, $Q^\pi : S \times A \rightarrow \mathbb{R}$. $Q^\pi(s, a)$ is the expected discounted reward when starting from state $s$, immediately following action $a$ and then following policy $\pi$.
It is given in terms of $V^\pi$ by
$$
Q^\pi(s, a) = R(s, a) + \gamma \sum_{s' \in S} T(s, a, s') V^\pi(s')
$$

\section{Proofs for IRL agent}

\begin{theorem}
  \label{thm:v_in_terms_r}
  $$
  \matr{V}^\pi = (\matr{I}_{N_S} - \gamma \matr{T} ^ \pi) ^{-1} \hat{\matr{\pi}} \vect(\matr{R})
  $$
  Where $N_S:=\vert S \vert$, $\matr{T}^\pi$ is the $N_S$ by $N_S$ matrix giving the probability of transitioning from one state to another when following the optimal policy, and $\hat{\matr{\pi}}$ is the $N_S$ by $N_SN_A$ matrix such that $\hat{\matr{\pi}} \vect(\matr{R}) = \matr{R}^\pi$ and $\matr{R}^\pi$ is the expected immediate reward for transitioning whilst following $\pi$.
\end{theorem}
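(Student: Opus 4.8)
The plan is to start from the Bellman recursion for $V^\pi$ stated in the Background and rewrite it as a linear fixed-point equation in the vector $\matr{V}^\pi \in \mathbb{R}^{N_S}$. Splitting the sum over actions, the contribution $\sum_{a\in A} \pi(s,a) R(s,a)$ is exactly the $s$-th entry of the expected-immediate-reward vector $\matr{R}^\pi$, while $\gamma \sum_{a\in A} \pi(s,a)\sum_{s'\in S} T(s,a,s') V^\pi(s')$ equals $\gamma \sum_{s'\in S} T^\pi(s,s') V^\pi(s')$ once we set $T^\pi(s,s') := \sum_{a\in A} \pi(s,a) T(s,a,s')$, i.e. the $(s,s')$ entry of $\matr{T}^\pi$. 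Hence the recursion becomes $\matr{V}^\pi = \matr{R}^\pi + \gamma \matr{T}^\pi \matr{V}^\pi$, equivalently $(\matr{I}_{N_S} - \gamma \matr{T}^\pi)\matr{V}^\pi = \matr{R}^\pi$.

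Next I would show that $\matr{I}_{N_S} - \gamma \matr{T}^\pi$ is invertible. Since each row of $\matr{T}^\pi$ is a convex combination of the rows $T(s,a,\cdot)$, which are themselves nonnegative and sum to one, $\matr{T}^\pi$ is row-stochastic and $\lVert \matr{T}^\pi \rVert_\infty = 1$, so $\lVert \gamma \matr{T}^\pi \rVert_\infty = \gamma < 1$ because $\gamma \in [0,1)$. Therefore the Neumann series $\sum_{k \ge 0} (\gamma \matr{T}^\pi)^k$ converges and equals $(\matr{I}_{N_S} - \gamma \matr{T}^\pi)^{-1}$; in particular the inverse exists, and solving the linear system gives $\matr{V}^\pi = (\matr{I}_{N_S} - \gamma \matr{T}^\pi)^{-1}\matr{R}^\pi$.

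Finally I would identify $\matr{R}^\pi$ with $\hat{\matr{\pi}}\vect(\matr{R})$. The map $\matr{R} \mapsto \matr{R}^\pi$ with $(\matr{R}^\pi)_s = \sum_{a\in A} \pi(s,a)\,(\matr{R})_{s,a}$ is linear in the entries of $\matr{R}$, so there is a unique $N_S \times N_SN_A$ matrix $\hat{\matr{\pi}}$ realising it on the vectorization $\vect(\matr{R})$; reading off coefficients, the entry of $\hat{\matr{\pi}}$ in row $s$ and in the column of $\vect(\matr{R})$ that carries $(\matr{R})_{s',a}$ equals $\pi(s,a)$ when $s'=s$ and $0$ otherwise, so $\hat{\matr{\pi}}$ is block-structured consistently with the chosen vec convention. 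Substituting $\matr{R}^\pi = \hat{\matr{\pi}}\vect(\matr{R})$ into the previous identity yields the claim. The only step with real content is the invertibility argument via row-stochasticity and $\gamma < 1$; everything else is unwinding the definitions, and the main thing to be careful about is keeping the vec convention for $\hat{\matr{\pi}}$ fixed and consistent throughout.
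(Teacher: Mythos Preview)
Your proposal is correct and follows essentially the same route as the paper: start from the Bellman recursion, collapse it to $\matr{V}^\pi = \matr{R}^\pi + \gamma \matr{T}^\pi \matr{V}^\pi$, and solve for $\matr{V}^\pi$ after substituting $\matr{R}^\pi = \hat{\matr{\pi}}\vect(\matr{R})$. Your argument is in fact more careful than the paper's, which silently assumes the invertibility of $\matr{I}_{N_S} - \gamma \matr{T}^\pi$ that you justify via row-stochasticity and the Neumann series.
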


\begin{proof}
  We have that, $\forall s \in S$
  \begin{align*}
    V^\pi(s) &= \sum_{a \in A} \pi(s, a)\left(R(s, a) + \gamma \sum_{s'\in S} T(s, a, s') V^\pi(s')\right) \\
    V^\pi(s) &= \sum_{a \in A} \pi(s, a)\left( \matr{R}_{s, a} + \gamma \matr{T}_{s, a} \matr{V}^\pi\right)
  \end{align*}
  and so
  \begin{align*}
    \matr{V}^\pi &= \matr{R}^\pi + \gamma \matr{T}^\pi \matr{V}^\pi \\
    (\matr{I} - \gamma\matr{T}^\pi) \matr{V}^\pi &=\matr{R}^\pi \\
    \matr{V}^\pi &= (\matr{I} - \gamma \matr{T}^\pi)^{-1} \hat{\matr{\pi}}\vect{\matr{R}}
  \end{align*}
\end{proof}

\begin{theorem}
  \label{thm:q_in_terms_r}
  $$
  \matr{Q}^\pi = \left( \hat{\matr{I}} + \gamma \matr{T} (\matr{I}_{N_S} - \gamma \matr{T}^\pi)^{-1} \hat{\matr{\pi}} \right) \vect(\matr{R})
  $$
  where $\hat{\matr{I}}$ is $\matr{I}_{N_S N_A}$ reshaped to be $N_S$ by $N_A$ by $N_SN_A$
\end{theorem}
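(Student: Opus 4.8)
The plan is to start from the one-step relation between $Q^\pi$ and $V^\pi$ recalled in the Background and substitute the closed form for $\matr{V}^\pi$ established in \cref{thm:v_in_terms_r}. For every $s \in S$ and $a \in A$ we have
\begin{equation*}
Q^\pi(s,a) = R(s,a) + \gamma \sum_{s' \in S} T(s,a,s') V^\pi(s'),
\end{equation*}
which in the tensor notation of the theorem reads $\matr{Q}^\pi = \matr{R} + \gamma \matr{T}\matr{V}^\pi$, where $\matr{T}\matr{V}^\pi$ denotes contraction of the last ($s'$) index of the $N_S$ by $N_A$ by $N_S$ tensor $\matr{T}$ against $\matr{V}^\pi$, leaving an object indexed by $(s,a)$.

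Next I would rewrite the immediate-reward term so that everything is expressed against $\vect(\matr{R})$. By definition $\hat{\matr{I}}$ is $\matr{I}_{N_S N_A}$ reshaped to $N_S$ by $N_A$ by $N_S N_A$, so contracting it with $\vect(\matr{R})$ simply undoes the vectorisation: entrywise $(\hat{\matr{I}}\vect(\matr{R}))_{s,a} = R(s,a)$, i.e.\ $\hat{\matr{I}}\vect(\matr{R}) = \matr{R}$. Substituting this together with $\matr{V}^\pi = (\matr{I}_{N_S} - \gamma\matr{T}^\pi)^{-1}\hat{\matr{\pi}}\vect(\matr{R})$ from \cref{thm:v_in_terms_r} into $\matr{Q}^\pi = \matr{R} + \gamma\matr{T}\matr{V}^\pi$ gives
\begin{equation*}
\matr{Q}^\pi = \hat{\matr{I}}\vect(\matr{R}) + \gamma\matr{T}(\matr{I}_{N_S} - \gamma\matr{T}^\pi)^{-1}\hat{\matr{\pi}}\vect(\matr{R}),
\end{equation*}
and factoring $\vect(\matr{R})$ out on the right yields the claimed identity.

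The only real subtlety — and the step I would be most careful about — is the index bookkeeping for the reshaped objects: confirming that the contraction conventions make $\matr{T}\matr{V}^\pi$ an $N_S$ by $N_A$ array whose $(s,a)$ entry is $\sum_{s'} T(s,a,s')V^\pi(s')$, and that $\hat{\matr{I}}$ and $\hat{\matr{\pi}}$ are shaped so that $\hat{\matr{I}}\vect(\matr{R})$ and $\gamma\matr{T}(\matr{I}_{N_S} - \gamma\matr{T}^\pi)^{-1}\hat{\matr{\pi}}\vect(\matr{R})$ are conformable and both indexed consistently by $(s,a)$. Once the shapes are pinned down the argument is pure substitution and the distributive law; no new invertibility concern arises beyond the one already used in \cref{thm:v_in_terms_r}, namely that $\matr{I}_{N_S} - \gamma\matr{T}^\pi$ is invertible because $\gamma < 1$ and $\matr{T}^\pi$ is row-stochastic.
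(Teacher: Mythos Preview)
Your proposal is correct and follows exactly the same route as the paper: start from the one-step Bellman relation $\matr{Q}^\pi = \matr{R} + \gamma\matr{T}\matr{V}^\pi$, substitute the closed form for $\matr{V}^\pi$ from \cref{thm:v_in_terms_r}, rewrite $\matr{R}$ as $\hat{\matr{I}}\vect(\matr{R})$, and factor. Your extra remarks on shape/indexing conventions and on the invertibility of $\matr{I}_{N_S} - \gamma\matr{T}^\pi$ are not in the paper's terse derivation but are welcome clarifications rather than a different argument.
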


\begin{proof}
  \begin{align*}
    Q^\pi(s, a) &= R(s, a) + \gamma \sum_{s' \in S} \matr{T}_{s, a}(s') V^\pi(s') \\
    \matr{Q}^\pi &= \matr{R} + \gamma \matr{T} \matr{V}^\pi \\
    \matr{Q}^\pi &= \matr{R} + \gamma \matr{T}(\matr{I}_{N_S} - \gamma \matr{T}^\pi )^{-1} \hat{\matr{\pi}} \vect({\matr{R}}) \\
    \matr{Q}^\pi &= \left( \hat{\matr{I}} + \gamma \matr{T} (\matr{I}_{N_S} - \gamma \matr{T}^\pi)^{-1} \matr{\hat{\pi}}\right) \vect{(\matr{R})}
  \end{align*}
\end{proof}

\begin{theorem}
  \label{thm:r_valid}
  For a reward function $\matr{R}$ to be valid, we require that
  $$
  \left\{\left[ \widetilde{\matr{I}}_{N_S} + \gamma (\widetilde{\matr{T}}^\pi - \matr{T}) (\matr{I}_{N_S} - \gamma \matr{T}^\pi ) ^{-1} \right] \hat{\matr{\pi}} - \hat{\matr{I}} \right\} \vect(\matr{R}) \succeq 0
  $$
  where
  \begin{itemize}
    \item For a matrix $\matr{X}$ of dimension $N_S$ by $N_S$, $\widetilde{\matr{X}}$ is the matrix obtained by stacking $N_A$ copies of $\matr{X}$ giving a matrix of dimension $N_S$ by $N_A$ by $N_S$
    \item For matrices $\matr{X}$ of dimension $n$ by $m$, and $c \in \mathbb{R}$, $\matr{X} \succeq c$ means that for all $0 \leq i\leq n$ and $0 \leq j \leq m$, $\matr{X}_{i, j} \geq c$
  \end{itemize}
\end{theorem}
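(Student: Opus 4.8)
The plan is to derive the stated inequality directly from the classical optimality characterization used in \citet{ng2000algorithms}: a deterministic policy $\pi$ is optimal under reward $\matr{R}$ if and only if $Q^\pi(s,a) \le Q^\pi(s,\pi(s))$ for every state $s$ and action $a$, which follows from the Bellman optimality equations. Since $\pi$ is deterministic, $Q^\pi(s,\pi(s)) = V^\pi(s)$, so the requirement is $V^\pi(s) \ge Q^\pi(s,a)$ for all $s,a$ (the case $a=\pi(s)$ holds with equality, so including it is harmless). Writing $\widetilde{\matr{V}}^\pi$ for the $N_S$-by-$N_A$ array whose $(s,a)$ entry is $V^\pi(s)$, this whole family of $N_S N_A$ scalar constraints is the single tensor inequality $\widetilde{\matr{V}}^\pi - \matr{Q}^\pi \succeq 0$.

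Next I would substitute the closed forms already proved. By \cref{thm:v_in_terms_r}, $\matr{V}^\pi = (\matr{I}_{N_S} - \gamma\matr{T}^\pi)^{-1}\hat{\matr{\pi}}\vect(\matr{R})$; since the $\widetilde{(\cdot)}$ operation just inserts an $N_A$-sized axis that is inert under the subsequent contractions, $\widetilde{\matr{V}}^\pi = \widetilde{(\matr{I}_{N_S}-\gamma\matr{T}^\pi)^{-1}}\,\hat{\matr{\pi}}\vect(\matr{R})$. By \cref{thm:q_in_terms_r}, $\matr{Q}^\pi = \bigl(\hat{\matr{I}} + \gamma\matr{T}(\matr{I}_{N_S}-\gamma\matr{T}^\pi)^{-1}\hat{\matr{\pi}}\bigr)\vect(\matr{R})$. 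Subtracting and factoring $\vect(\matr{R})$ out on the right turns $\widetilde{\matr{V}}^\pi - \matr{Q}^\pi \succeq 0$ into $\bigl[\widetilde{(\matr{I}_{N_S}-\gamma\matr{T}^\pi)^{-1}}\hat{\matr{\pi}} - \hat{\matr{I}} - \gamma\matr{T}(\matr{I}_{N_S}-\gamma\matr{T}^\pi)^{-1}\hat{\matr{\pi}}\bigr]\vect(\matr{R}) \succeq 0$.

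The final step is to rewrite the $\widetilde{(\matr{I}_{N_S}-\gamma\matr{T}^\pi)^{-1}}$ term so the bracket matches the statement. Put $\matr{M} = (\matr{I}_{N_S}-\gamma\matr{T}^\pi)^{-1}$; the resolvent identity $(\matr{I}_{N_S}-\gamma\matr{T}^\pi)\matr{M} = \matr{I}_{N_S}$ rearranges to $\matr{M} = \matr{I}_{N_S} + \gamma\matr{T}^\pi\matr{M}$. Applying $\widetilde{(\cdot)}$, which distributes over sums and commutes with right-multiplication by an $N_S$-by-$N_S$ matrix (both facts are immediate from the index formula for the fresh axis), gives $\widetilde{\matr{M}} = \widetilde{\matr{I}}_{N_S} + \gamma\widetilde{\matr{T}}^\pi\matr{M}$. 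Substituting this and collecting the two terms carrying a factor of $\matr{M}$ yields $\widetilde{\matr{M}} - \gamma\matr{T}\matr{M} = \widetilde{\matr{I}}_{N_S} + \gamma(\widetilde{\matr{T}}^\pi - \matr{T})\matr{M}$, which is precisely the bracket in the claimed inequality once $\matr{M}$ is re-expanded. I expect the only real obstacle to be bookkeeping: making sure every $\widetilde{(\cdot)}$, $\hat{(\cdot)}$ and $\vect(\cdot)$ reshaping lines up so the contractions are well-defined and that $\widetilde{(\cdot)}$ genuinely commutes with the multiplications used — there is no analytic content beyond the one-line resolvent identity, but the tensor-shape accounting is where a slip would occur.
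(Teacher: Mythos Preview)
Your argument is correct and follows essentially the same route as the paper: start from the Bellman optimality condition $Q^\pi(s,\pi(s))\ge Q^\pi(s,a)$, substitute the closed form for $\matr{V}^\pi$ from \cref{thm:v_in_terms_r}, and rearrange into the stated tensor inequality. The only cosmetic difference is that the paper expands $Q^\pi$ directly from its Bellman definition $Q^\pi = \matr{R} + \gamma\matr{T}\matr{V}^\pi$, so the factor $(\widetilde{\matr{T}}^\pi - \matr{T})$ drops out immediately, whereas you first invoke the packaged form from \cref{thm:q_in_terms_r} and then need the resolvent identity $\matr{M}=\matr{I}_{N_S}+\gamma\matr{T}^\pi\matr{M}$ to unpack it back to the same bracket; both are valid and the bookkeeping you flag is indeed the only place to be careful.
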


\begin{proof}
  The policy $\pi$ is optimal if and only if $\forall s \in S$ and $\forall a \in A$
  \begin{align*}
    & Q^\pi(s, \pi(s)) \geq Q^\pi(s, a') \\
    &\qquad   \Leftrightarrow \matr{R}_{s, \pi(s)} + \gamma \left(\matr{T}^\pi \matr{V}^\pi \right)_s \geq \matr{R}_{s, a} + \gamma \left( \matr{T}_{:, a, :} \matr{V}^\pi \right)_s  \\
    &\qquad  \Leftrightarrow \matr{R}^\pi + \gamma \matr{T}^\pi \matr{V}^\pi \succeq \matr{R}_{:, a} + \gamma \matr{T}_{:, a, :} \matr{V}^\pi  \\
    &\qquad \Leftrightarrow \hat{\matr{\pi}} \vect{(\matr{R})} - \matr{R}_{:, a}  \succeq \gamma (\matr{T}_{:, a, :} \matr{V}^\pi - \matr{T}^\pi \matr{V}^\pi ) \\
    &\qquad \Leftrightarrow \hat{\matr{\pi}} \vect{(\matr{R})} - \matr{R}_{:, a}  \succeq \gamma(\matr{T}_{:, a, :} - \matr{T}^\pi) \matr{V}^\pi \\
    &\qquad \Leftrightarrow \hat{\matr{\pi}} \vect{(\matr{R})} - \matr{R}_{:, a}  \succeq \gamma (\matr{T}_{:, a, :} - \matr{T}^\pi)(\matr{I}_{N_S} - \gamma \matr{T}^\pi)^{-1} \hat{\matr{\pi}} \vect (\matr{R}) \\
    &\qquad  \Leftrightarrow \hat{\matr{\pi}} \vect (\matr{R}) - \gamma (\matr{T}_{:, a, :} - \matr{T}^\pi)(\matr{I}_{N_S} - \gamma \matr{T}^\pi)^{-1} \hat{\matr{\pi}} \vect (\matr{R}) - \matr{R}_{:, a} \succeq 0\\
    &\qquad  \Leftrightarrow \left[ \matr{I}_{N_S} + \gamma (\matr{T}^\pi - \matr{T}_{:, a, :})(\matr{I}_{N_S} - \gamma \matr{T}^\pi)^{-1}\right] \hat{\matr{\pi}}\vect (\matr{R})  - \matr{R}_{:, a} \succeq 0 \\
    &\qquad  \Leftrightarrow \left[ \widetilde{\matr{I}}_{N_S} + \gamma(\widetilde{\matr{T}}^\pi - \matr{T})(\matr{I}_{N_S} - \gamma \matr{T}^\pi)^{-1} \right]\hat{\matr{\pi}} \vect (\matr{R})  - \matr{R} \succeq 0
  \end{align*}
\end{proof}

\begin{theorem}
  \label{thm:r_valid_equiv}
  Maximising expression (\ref{equ:old_objective_function}) is equivalent to minimising
  \begin{equation}
    \label{equ:objective_function}
    \sum_{s \in S} \max_{a \in A \backslash\{\pi(s)\}}\left( \hat{\matr{I}} - \left(\widetilde{\matr{I}}_{N_S} - \gamma \matr{T}\right)\left(\matr{I}_{N_S} - \gamma \matr{T}^\pi\right)^{-1} \hat{\matr{\pi}}\right)_{s, a} \cdot \vect(\matr{R}) + \lambda \vert \vert \matr{R} \vert \vert_1
  \end{equation}
\end{theorem}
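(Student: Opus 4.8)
The plan is to substitute the closed forms for $\matr{Q}^\pi$ and $\matr{V}^\pi$ from \cref{thm:q_in_terms_r,thm:v_in_terms_r} into (\ref{equ:old_objective_function}) and rearrange. First I would push the maximum through the subtraction: since $\pi(s)$ is fixed for each $s$,
\[
Q^\pi(s,\pi(s)) - \max_{a \in A \backslash \{\pi(s)\}} Q^\pi(s,a) = \min_{a \in A \backslash \{\pi(s)\}}\left(Q^\pi(s,\pi(s)) - Q^\pi(s,a)\right),
\]
so maximising (\ref{equ:old_objective_function}) is the same as minimising $\sum_{s \in S}\max_{a \in A \backslash \{\pi(s)\}}\left(Q^\pi(s,a) - Q^\pi(s,\pi(s))\right) + \lambda\|\matr{R}\|_1$. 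It then suffices to show that for every $s \in S$ and every $a \in A \backslash \{\pi(s)\}$,
\[
Q^\pi(s,a) - Q^\pi(s,\pi(s)) = \left(\hat{\matr{I}} - (\widetilde{\matr{I}}_{N_S} - \gamma \matr{T})(\matr{I}_{N_S} - \gamma \matr{T}^\pi)^{-1} \hat{\matr{\pi}}\right)_{s,a} \cdot \vect(\matr{R}).
\]

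For the first term, \cref{thm:q_in_terms_r} gives immediately $Q^\pi(s,a) = \left(\hat{\matr{I}} + \gamma \matr{T}(\matr{I}_{N_S} - \gamma \matr{T}^\pi)^{-1}\hat{\matr{\pi}}\right)_{s,a}\cdot\vect(\matr{R})$. For the second term, I would use that the policy here is deterministic, so $Q^\pi(s,\pi(s)) = V^\pi(s)$ by the Bellman equation, and then \cref{thm:v_in_terms_r} yields $Q^\pi(s,\pi(s)) = \left((\matr{I}_{N_S} - \gamma \matr{T}^\pi)^{-1}\hat{\matr{\pi}}\right)_{s}\cdot\vect(\matr{R})$, a coefficient indexed only by $s$. (Equivalently, without the Bellman shortcut, one can start from $Q^\pi(s,\pi(s)) = \matr{R}^\pi_s + \gamma(\matr{T}^\pi\matr{V}^\pi)_s$ and apply the Neumann-series identity $\matr{I}_{N_S} + \gamma\matr{T}^\pi(\matr{I}_{N_S} - \gamma\matr{T}^\pi)^{-1} = (\matr{I}_{N_S} - \gamma\matr{T}^\pi)^{-1}$ to reach the same place.) Lifting this $s$-indexed coefficient to an $(s,a)$-indexed one is precisely what replacing $(\matr{I}_{N_S} - \gamma\matr{T}^\pi)^{-1}$ by $\widetilde{\matr{I}}_{N_S}(\matr{I}_{N_S} - \gamma\matr{T}^\pi)^{-1}$ accomplishes, since stacking $N_A$ copies of $\matr{I}_{N_S}$ and contracting reproduces the same row for every $a$. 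Subtracting the two coefficients and factoring $(\matr{I}_{N_S} - \gamma\matr{T}^\pi)^{-1}\hat{\matr{\pi}}$ out of the $\widetilde{\matr{I}}_{N_S}$ and $\gamma\matr{T}$ pieces leaves exactly the claimed coefficient; summing over $s$ and taking the per-state maximum then recovers (\ref{equ:objective_function}).

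The one genuinely delicate point is the tensor bookkeeping: keeping straight the shapes of $\hat{\matr{I}}$ ($N_S \times N_A \times N_S N_A$), of $\widetilde{\matr{I}}_{N_S}$ and $\matr{T}$ (both $N_S \times N_A \times N_S$), and of $\hat{\matr{\pi}}$ ($N_S \times N_S N_A$), and checking that the stacking/contraction convention defining $\widetilde{\cdot}$ is the one that makes the lift of the $s$-indexed $Q^\pi(s,\pi(s))$ term to the $(s,a)$-indexed form correct. Everything else is routine linear algebra, and the $\lambda\|\matr{R}\|_1$ term is carried through unchanged apart from the overall sign flip incurred when the maximisation is turned into a minimisation.
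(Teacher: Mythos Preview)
Your proposal is correct and follows essentially the same route as the paper: negate to turn the maximisation into a minimisation, move the $a$-independent term $Q^\pi(s,\pi(s))$ inside the $\max$, replace it by $V^\pi(s)$ (deterministic policy), lift the $s$-indexed value via $\widetilde{\matr{I}}_{N_S}$, and substitute the closed forms from \cref{thm:v_in_terms_r,thm:q_in_terms_r}. Your additional remarks on the Neumann-series identity and the tensor-shape bookkeeping are not in the paper's proof but are helpful elaborations rather than a different approach.
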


\begin{proof}
  Maximising expression (\ref{equ:old_objective_function}) is equivalent to minimising
  \begin{align*}
    &\sum_{s \in S}\left\{ \max_{a \in A \backslash\{\pi(s)\}}\left(\matr{Q}^\pi_{s, a}\right) - \matr{Q}^\pi_{s, \pi(s)} \right\} + \lambda \vert \vert \matr{R} \vert \vert_1 \\
    & \qquad = \sum_{s \in S} \max_{a \in A \backslash\{\pi(s)\}}\left( \matr{Q}^\pi_{s, a} - \matr{V}^\pi_s \right) + \lambda \vert \vert \matr{R} \vert \vert_1 \\
    &\qquad  = \sum_{s \in S} \max_{a \in A \backslash\{\pi(s)\}} \left( \matr{Q}^\pi -  \widetilde{\matr{I}}_{N_S} \matr{V}\right)_{s, a} + \lambda \vert \vert \matr{R}\vert \vert_1 \\
    & \qquad = \sum_{s \in S} \max_{a \in A \backslash\{\pi(s)\}}\left\{ \left( \hat{\matr{I}} - \left( \widetilde{\matr{I}}_{N_S} - \gamma \matr{T}\right)\left(\matr{I}_{N_S} - \gamma \matr{T}^\pi\right)^{-1} \hat{\matr{\pi}}\right)_{s, a} \cdot \vect(\matr{R}) \right\} \\
    & \qquad \qquad + \lambda \vert \vert \matr{R} \vert \vert_1
  \end{align*}
\end{proof}

In the tabular setting, we can express \cref{equ:r_state_plus_action} as
$$
\vect(\matr{R}) = \vect(\matr{R}_A) + \hat{\mathbb{1}}_{N_s,N_A}^\intercal \matr{R}_S
$$
Where $\hat{\mathbb{1}}_{N_s,N_A}^\intercal$ is such that $ \hat{\mathbb{1}}_{N_s, N_A}^\intercal \matr{R}_S $ equal to $N_A$ copies of $\matr{R}_S$ stacked on top of each other.

\begin{theorem}
  \label{thm:express_in_rs}
  Writing $\matr{\Omega} =  \hat{\matr{I}} - \left( \widetilde{\matr{I}}_{N_S} - \gamma \matr{T}\right)\left(\matr{I}_{N_S} - \gamma \matr{T}^\pi\right)^{-1} \hat{\matr{\pi}}$, minimising expression (\ref{equ:objective_function}) is equivalent to minimising
  \begin{equation}
    \sum_{s \in S} \max_{a \in A \backslash \{\pi(s)\}} \matr{\Omega}_{s, a} \hat{\mathbb{1}}_{N_s,N_A}^\intercal \matr{R}_S + \lambda \left\vert \left\vert \matr{R}_S \right\vert \right\vert_1
  \end{equation}
\end{theorem}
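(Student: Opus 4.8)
The plan is to obtain the claim by direct substitution of the tabular decomposition $\vect(\matr{R}) = \vect(\matr{R}_A) + \onehat\matr{R}_S$ into expression (\ref{equ:objective_function}), exploiting that each coefficient row $\matr{\Omega}_{s,a}$ acts linearly. Substituting into the linear part $\sum_{s\in S}\max_{a\in A\backslash\{\pi(s)\}}\matr{\Omega}_{s,a}\cdot\vect(\matr{R})$ gives
\[
\sum_{s\in S}\max_{a\in A\backslash\{\pi(s)\}}\Bigl(\matr{\Omega}_{s,a}\cdot\vect(\matr{R}_A) + \matr{\Omega}_{s,a}\,\onehat\,\matr{R}_S\Bigr) + \lambda\bigl\|\vect(\matr{R}_A) + \onehat\matr{R}_S\bigr\|_1 .
\]
So the statement reduces to two observations: (i) the term $\matr{\Omega}_{s,a}\cdot\vect(\matr{R}_A)$ may be discarded without changing the minimiser over $\matr{R}_S$, and (ii) the regulariser may be taken to act on $\matr{R}_S$ directly.

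For (i), I would use that $\matr{\Omega}$ annihilates constant reward vectors. Recall from the proof of \cref{thm:r_valid_equiv} that $\matr{\Omega}_{s,a}\cdot\vect(\matr{R}) = \matr{Q}^\pi_{s,a} - \matr{V}^\pi_s$; since the action reward $\matr{R}_A$ is fixed by the environment, $\matr{\Omega}_{s,a}\cdot\vect(\matr{R}_A)$ is a known constant, independent of the optimisation variable $\matr{R}_S$. When the per-action cost is a constant offset, as in the environments considered here, $\vect(\matr{R}_A)$ is a multiple of the all-ones vector $\mathbb{1}_{N_S N_A}$; then, using that $\matr{T}$ and $\matr{T}^\pi$ are row-stochastic, one has $\hat{\matr{\pi}}\,\mathbb{1}_{N_S N_A} = \mathbb{1}_{N_S}$, $(\matr{I}_{N_S} - \gamma\matr{T}^\pi)^{-1}\mathbb{1}_{N_S} = (1-\gamma)^{-1}\mathbb{1}_{N_S}$, $(\widetilde{\matr{I}}_{N_S} - \gamma\matr{T})\mathbb{1}_{N_S} = (1-\gamma)\mathbb{1}_{N_S N_A}$ and $\hat{\matr{I}}\,\mathbb{1}_{N_S N_A} = \mathbb{1}_{N_S N_A}$, so that $\matr{\Omega}\,\mathbb{1}_{N_S N_A} = 0$. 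Hence $\matr{\Omega}_{s,a}\cdot\vect(\matr{R}_A) = 0$ for every $(s,a)$, the inner $\max$ is unchanged, and the linear part is exactly $\sum_{s\in S}\max_{a\in A\backslash\{\pi(s)\}}\matr{\Omega}_{s,a}\,\onehat\,\matr{R}_S$.

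For (ii), with $\lambda = 0$ (the value used in the experiments) there is nothing to prove; more generally, once $\matr{R}_A$ is held fixed the natural sparsity prior is on the free parameters $\matr{R}_S$, i.e.\ $\lambda\|\matr{R}_S\|_1$, and combining this with (i) yields the stated form. I expect the main obstacle to be the bookkeeping in (i): the $\matr{R}_A$ contribution sits \emph{inside} the $\max$ over actions, so it is not enough that it be constant in $\matr{R}_S$ — one needs it to be the same for every $a$ before it can be dropped without moving the argmin, and the clean way to see this is the annihilation identity $\matr{\Omega}\,\mathbb{1}_{N_S N_A} = 0$ above (equivalently, adding a constant to all rewards shifts every $Q^\pi$ and $V^\pi$ by the same amount). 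The remainder is careful tracking of the reshaping conventions so that the length-$N_S N_A$, length-$N_S$ and $N_S\times N_A$ objects are composed consistently.
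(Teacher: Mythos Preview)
Your overall route matches the paper's: substitute $\vect(\matr{R}) = \vect(\matr{R}_A) + \onehat\matr{R}_S$, replace the regulariser by $\lambda\|\matr{R}_S\|_1$ as a modelling choice, and then argue that the $\matr{R}_A$ contribution can be discarded because $\matr{R}_A$ is fixed. Your treatment of (ii) is essentially the paper's.

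The divergence is in (i), and there your argument has a genuine gap. You correctly note that $\matr{\Omega}_{s,a}\vect(\matr{R}_A)$ sits \emph{inside} the $\max$ over $a$, so being constant in $\matr{R}_S$ is not by itself enough to drop it; you then resolve this via the annihilation identity $\matr{\Omega}\,\mathbb{1}_{N_S N_A}=0$, invoking the claim that ``the per-action cost is a constant offset, as in the environments considered here''. That premise is false in exactly the setting the theorem is meant for: the defining feature of the carefulness mechanism is that higher carefulness incurs a higher movement cost (the paper states the cost is linear in the carefulness level), so $R_A(a)$ genuinely varies with $a$ and $\vect(\matr{R}_A)$ is \emph{not} a multiple of $\mathbb{1}_{N_S N_A}$. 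Your annihilation step therefore does not apply, and $\matr{\Omega}_{s,a}\vect(\matr{R}_A)$ depends on $a$. The paper's own proof does not attempt annihilation; it instead writes
\[
\max_{a}\bigl(\matr{\Omega}_{s,a}\vect(\matr{R}_A)+\matr{\Omega}_{s,a}\onehat\matr{R}_S\bigr)
=\max_{a}\matr{\Omega}_{s,a}\vect(\matr{R}_A)+\max_{a}\matr{\Omega}_{s,a}\onehat\matr{R}_S
\]
and drops the first term as a constant in $\matr{R}_S$. You are right that this split is not an identity in general; the paper is effectively \emph{defining} the reduced objective rather than proving the minimisers coincide. So the subtlety you flagged is real, but your proposed fix does not close it for the environments at hand.
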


\begin{proof}
  Expression (\ref{equ:objective_function}) is
  \begin{align*}
    & \sum_{s \in S} \max_{a \in A \backslash \{ \pi(s) \}} \matr{\Omega}_{s, a} \vect(\matr{R}) + \lambda \vert\vert \matr{R} \vert \vert_1\\
    & \qquad = \sum_{s \in S} \max_{a \in A \backslash \{ \pi(s) \}} \matr{\Omega}_{s, a} \left(\vect(\matr{R}_A) + \onehat \matr{R}_S\right) \\
    & \qquad \qquad + \lambda \left\vert\left\vert \vect(\matr{R}_A) + \hat{\mathbb{1}}_{N_s,N_A}^\intercal \matr{R}_S \right\vert\right\vert_1 \\
  \end{align*}
  As the second term is an arbitrary choice to encourage more zeros in the reward, we can replace it with $\vert\vert \matr{R}_S\vert\vert_1$. We therefore get that we want to minimise
  $$
  \sum_{s} \left\{\max_{a \in A \backslash\{\pi(s)\}} \left(\matr{\Omega}_{s, a} \vect(\matr{R}_A)\right) +\max_{a \in A \backslash\{\pi(s)\}} \left(\matr{\Omega}_{s, a} \onehat \matr{R}_S \right)\right\} + \lambda \vert\vert\matr{R}_S\vert\vert_1
  $$
  And since $\matr{R}_A$ is fixed, we get
  \begin{equation*}
    \sum_{s \in S} \max_{a \in A \backslash \{\pi(s)\}} \matr{\Omega}_{s, a} \hat{\mathbb{1}}_{N_s,N_A}^\intercal \matr{R}_S + \lambda \left\vert \left\vert \matr{R}_S \right\vert \right\vert_1
  \end{equation*}
\end{proof}

\end{document}